\definecolor{darkblue}{rgb}{0, 0, 0.5}
\title{Local Mixtures of Experts: \\ Essentially Free Test-Time Training via Model Merging}
\author{Ryo Bertolissi\thanks{Equal contribution. Correspondence to \texttt{jonas.huebotter@inf.ethz.ch}.}, \,Jonas Hübotter\footnotemark[1], \,Ido Hakimi, Andreas Krause \\
ETH Z\"urich, Switzerland
}
\definecolor{lightgray}{gray}{0.95}
\newcounter{insight}
\DeclareRobustCommand{\insight}{%
  \refstepcounter{insight}%
  Insight~\theinsight:\xspace%
}
\titlespacing*{\paragraph}{0pt}{0.25ex}{2ex}
\definecolor{chaptercolor}{HTML}{1A254B}
\definecolor{darkblue}{HTML}{1A254B}
\definecolor{linkcolor}{HTML}{2B50AA}
\definecolor{citecolor}{HTML}{2B50AA}
\definecolor{lightlinkcolor}{HTML}{9A8F97}
\definecolor{darklinkcolor}{HTML}{1A254B}
\definecolor{light}{HTML}{F8F8F8}
\definecolor{lightblue}{HTML}{A7BED3}
\definecolor{red}{HTML}{F2545B}
\definecolor{blue}{HTML}{2b50aa}
\theoremstyle{plain}
\newtheorem{theorem}{Theorem}[section]
\newtheorem{proposition}[theorem]{Proposition}
\newtheorem{informalproposition}[theorem]{Informal Proposition}
\newcommand{\figref}[2]{Figure~\hyperref[#1]{\ref{#1} (#2)}}
\NewDocumentCommand{\incfig}{mo}{
  \begin{center}
    \IfValueT{#2}{\def\svgwidth{#2}}{\def\svgwidth{\columnwidth}}
    \import{./figures/}{#1.pdf_tex}
  \end{center}
}
\NewDocumentCommand{\incplt}{O{\columnwidth}m}{%
  \begin{center}
    \adjustbox{center}{\adjustbox{width=#1+10pt}{\includegraphics[width=#1]{./plots/output/#2.pdf}}}
  \end{center}
}
\newcommand*{\abs}[1]{| #1 |}
\NewDocumentCommand{\norm}{sm}{\IfBooleanTF{#1}{\|#2\|}{\left\| #2 \right\|}}
\NewDocumentCommand{\normF}{sm}{\IfBooleanTF{#1}{\|#2\|_{\mathrm{F}}}{\left\| #2 \right\|_{\mathrm{F}}}}
\NewDocumentCommand{\dTV}{sm}{d_{\mathrm{TV}}\IfBooleanTF{#1}{(#2)}{\left( #2 \right)}}
\DeclareMathOperator*{\defeq}{\,\dot{=}\,}
\DeclareMathOperator*{\diam}{diam}
\DeclareMathOperator{\relu}{ReLU}
\DeclareMathOperator{\softmax}{softmax}
\DeclareMathOperator{\ssoftmax}{sparse-softmax}
\DeclarePairedDelimiter\parentheses{(}{)}
\DeclarePairedDelimiter\brackets{[}{]}
\DeclarePairedDelimiter\braces{\{}{\}}
\newcommand{\R}{\mathbb{R}}
\renewcommand{\vec}[1]{\boldsymbol{#1}}
\newcommand{\mat}[1]{\boldsymbol{#1}}
\newcommand{\spa}[1]{\mathcal{#1}}
\newcommand{\opt}[1]{#1^\star}
\NewDocumentCommand{\irred}{som}{\ensuremath{\sigma_{\hspace{-1pt}\infty}\IfBooleanTF{#1}{^2}{}(#3\IfValueTF{#2}{;#2}{})}}
\NewDocumentCommand{\fnPr}{}{\mathbb{P}}
\RenewDocumentCommand{\Pr}{om}{\fnPr\IfValueT{#1}{_{#1}}\parentheses*{#2}}
\RenewDocumentCommand{\H}{mo}{\mathrm{H}\IfValueTF{#2}{\!\left[#1\ \middle|\ #2\right]}{\brackets*{#1}}}
\NewDocumentCommand{\Hsm}{mo}{\mathrm{H}\IfValueTF{#2}{[#1 \mid #2]}{\brackets{#1}}}
\NewDocumentCommand{\I}{mmo}{\mathrm{I}\IfValueTF{#3}{\!\left(#1;#2\ \middle|\ #3\right)}{\parentheses*{#1; #2}}}
\NewDocumentCommand{\Ism}{mmo}{\mathrm{I}\IfValueTF{#3}{(#1;#2 \mid #3)}{\parentheses{#1; #2}}}
\NewDocumentCommand{\E}{somo}{\ensuremath{\mathbb{E}\IfValueT{#2}{_{#2}}{} \IfBooleanTF{#1}{#3}{\IfValueTF{#4}{\!\left[#3\ \middle|\ #4\right]}{\brackets*{#3}}}}}
\NewDocumentCommand{\Esm}{somo}{\ensuremath{\mathbb{E}\IfValueT{#2}{_{#2}}{} \IfBooleanTF{#1}{#3}{\IfValueTF{#4}{\!\left[#3\ \middle|\ #4\right]}{\brackets{#3}}}}}
\NewDocumentCommand{\Var}{somo}{\mathrm{Var}\IfValueT{#2}{_{#2}}{} \IfBooleanTF{#1}{#3}{\IfValueTF{#4}{\!\left(#3\ \middle|\ #4\right)}{\parentheses*{#3}}}}
\NewDocumentCommand{\Varsm}{somo}{\mathrm{Var}\IfValueT{#2}{_{#2}}{} \IfBooleanTF{#1}{#3}{\IfValueTF{#4}{\left(#3\ \middle|\ #4\right)}{\parentheses{#3}}}}
\NewDocumentCommand{\Cov}{som}{\mathrm{Cov}\IfValueT{#2}{_{#2}}{} \IfBooleanTF{#1}{#3}{\brackets*{#3}}}
\NewDocumentCommand{\Cor}{som}{\mathrm{Cor}\IfValueT{#2}{_{#2}}{} \IfBooleanTF{#1}{#3}{\brackets*{#3}}}
\NewDocumentCommand{\grad}{e_}{\boldsymbol{\nabla}\IfValueT{#1}{_{\!\!#1}\,}}
\NewDocumentCommand{\diag}{som}{\mathrm{diag}\IfValueT{#2}{_{#2}}{} \IfBooleanTF{#1}{\braces{#3}}{\braces*{#3}}}
\NewDocumentCommand{\N}{somm}{\mathcal{N}\IfBooleanTF{#1}{\left(}{(}\IfValueT{#2}{#2;}{} #3, #4\IfBooleanTF{#1}{\right)}{)}}
\NewDocumentCommand{\GP}{omm}{\mathcal{GP}(\IfValueT{#1}{#1;}{} #2, #3)}
\newcommand{\vx}{\vec{x}}
\newcommand{\vxs}{\vec{\opt{x}}}
\newcommand{\vz}{\vec{z}}
\newcommand{\vphi}{\boldsymbol{\phi}}
\newcommand{\vtheta}{\boldsymbol{\theta}}
\newcommand{\mA}{\mat{A}}
\newcommand{\mB}{\mat{B}}
\newcommand{\mPhi}{\mat{\Phi}}
\newcommand{\mTheta}{\mat{\Theta}}
\newcommand{\mW}{\mat{W}}
\newcommand{\spD}{\spa{D}}
\newcommand{\spL}{\spa{L}}
\newcommand{\spX}{\spa{X}}
\newcommand{\spY}{\spa{Y}}
\begin{document}

\ifcolmsubmission
\linenumbers
\fi

\maketitle

\begin{abstract}
    Mixture of expert~(MoE) models are a promising approach to increasing model capacity without increasing inference cost, and are core components of many state-of-the-art language models.
    However, current MoE models typically use only few experts due to prohibitive training and inference cost.
    We propose \emph{\textbf{T}est-\textbf{T}ime \textbf{M}odel \textbf{M}erging}~(TTMM) which scales the MoE paradigm to an order of magnitude more experts and uses model merging to avoid almost any test-time overhead.
    We show that TTMM is an approximation of test-time training~(TTT), which fine-tunes an expert model for each prediction task, i.e., prompt.
    TTT has recently been shown to significantly improve language models, but is computationally expensive.
    We find that performance of TTMM improves with more experts and approaches the performance of TTT.
    Moreover, we find that with a 1B parameter base model, \emph{TTMM is more than $100\times$ faster than TTT} at test-time by amortizing the cost of TTT at train-time.
    Thus, TTMM offers a promising cost-effective approach to scale test-time training.\looseness=-1%
\end{abstract}

\begin{figure}[h]
  \vspace{-1ex}
  \centering
  \incplt[0.85\textwidth]{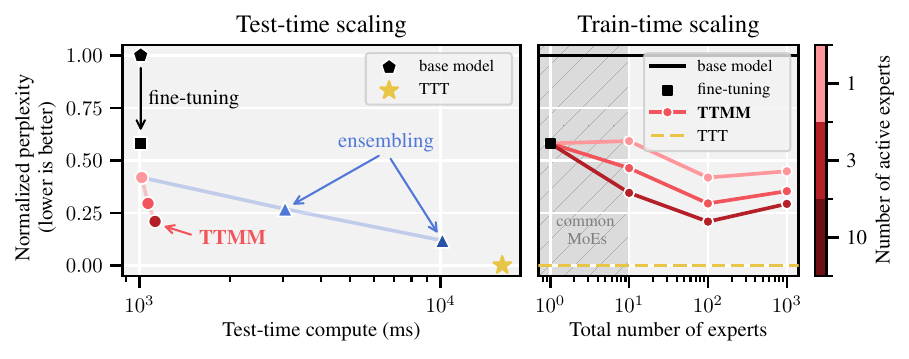}
  \vspace{-2.5ex}
  \caption[]{Accuracy gains of TTMM when scaling test-time compute~(left) and pre-training compute~(right).
  The normalized perplexity between base model and TTT model is averaged across evaluation datasets and models.
  \textbf{Left:}~We measure language modeling ability against the time to generate 100 tokens. TTMM approaches the performance of TTT without almost any test-time overhead when increasing the number of active experts. Ensembling rather than merging experts improves performance, but at a much higher test-time cost. Standard fine-tuning on the training data improves performance without any test-time overhead, but does not yield close to the performance of TTT.
  \textbf{Right:}~We measure the performance of TTMM against the total number of experts. Whereas common MoE models use few experts, TTMM scales to an order of magnitude more experts, improving performance with more experts. Scaling the number of experts in TTMM does not affect training FLOPs or required GPU memory.\footnotemark\ Training the individual experts is embarrassingly parallelizable.\looseness=-1}
  \label{fig:results}
  \vspace{-1ex}
\end{figure}
\footnotetext{Scaling the number of experts increases CPU host memory, which is cheaper than GPU memory.}

\section{Introduction}

The standard paradigm of machine learning separates training and testing.
In training, we learn a model by \emph{inductively} extracting general rules from data, while in testing, we apply this model to new unseen data.
We study an alternative \emph{transductive} paradigm called \emph{test-time training}~\citep[TTT,][]{sun2020test} where we use a specific expert model for each prediction task.
Variations of this paradigm have been studied since the inception of the machine learning field~\citep{cleveland1979robust,cleveland1988locally,atkeson1997locally,bottou1992local}.
More recently, fine-tuning large pre-trained neural networks at test-time has regained interest in computer vision~\citep[e.g.,][]{jain2011online,shocher2018zero,sun2020test,gandelsman2022test,ruiz2023dreambooth} and language modeling~\citep{krause2018dynamic,krause2019dynamic,hardt2023test,sun2024learning,huebotter2025efficiently}.
\cite{hardt2023test} and \cite{huebotter2025efficiently} fine-tune a pre-trained language model on data related to the prompt, and show that this can significantly improve language modeling performance.
Beyond pure language modeling, TTT has been instrumental in state-of-the-art approaches to abstract reasoning~\citep{akyurek2024surprising}, video generation~\citep{dalal2025one}, and also been shown to improve reasoning language models~\citep{zuo2025ttrl,simonds2025ladder}.\looseness=-1

While TTT can significantly improve model performance, it comes with a high computational cost at test-time: TTT requires fine-tuning the model for every task, i.e., each prompt.
We propose \emph{\textbf{T}est-\textbf{T}ime \textbf{M}odel \textbf{M}erging}~(TTMM) which amortizes the cost of TTT at train time: \begin{itemize}\vspace{-3pt}
    \item At \textbf{train-time}, TTMM clusters the training data into many local neighborhoods and trains separate small expert LoRA adapters~\citep{hu2021lora} for each cluster.
    \item At \textbf{test-time}, TTMM dynamically selects a subset of LoRA adapters and merges their parameters to form a single task-specific model.
\end{itemize}\vspace{-3pt}
In our experiments, TTMM approaches the language modeling performance of TTT without incurring almost any significant compute or memory cost~(cf.~\cref{fig:results}).
\Cref{fig:ttt_vs_ttmm} illustrates the difference between TTT and TTMM.\looseness=-1

\begin{figure}
    \centering
    \includegraphics[width=0.8\textwidth]{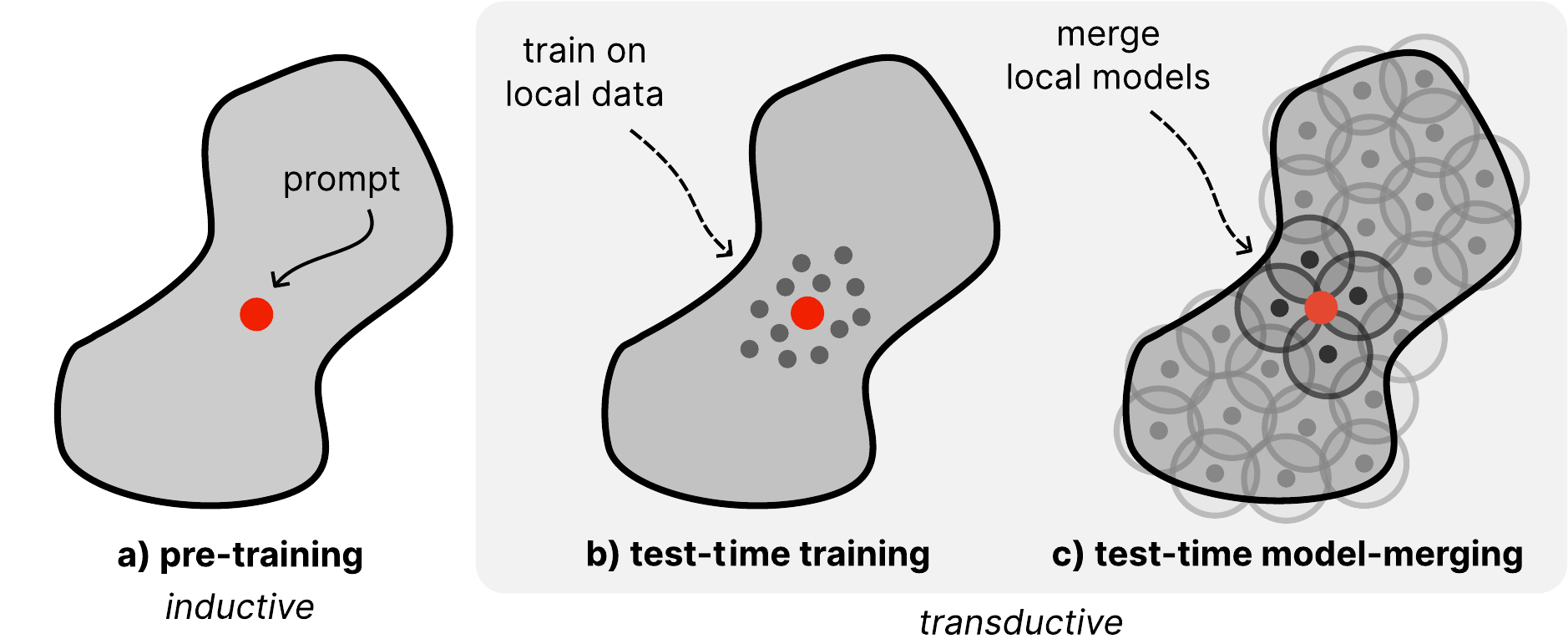}
    \caption{Standard \emph{inductive} language modeling pre-trains a single model on the dataset~(shown in \textcolor{gray}{gray}) which is then used for prediction. Test-Time Training~(TTT) improves performance by fine-tuning the model at test-time on data related to the prompt, but is computationally expensive. Test-Time Model Merging~(TTMM) fine-tunes the pre-trained model to many local neighborhoods at train-time, and merges the local models related to the prompt at test-time. Both TTT and TTMM target models specifically to the prompt~(i.e., are \emph{transductive}), yet, TTMM does not require expensive fine-tuning at test-time.\looseness=-1}
    \label{fig:ttt_vs_ttmm}
\end{figure}

Another extensive body of work studies merging multiple models to combine distinct capabilities and knowledge~\citep{yang2024model}.
We discuss the connection of TTMM to this literature on model merging and mixture of experts more extensively in \cref{sec:related_work}.
Model merging is typically used in a multitask setting with \emph{few} ``meta-tasks'' (e.g., coding, math, law, etc.) and corresponding expert models.
This multitask setting differs from the TTT setting where each individual prompt is considered its own task.
Hence, TTMM trains \emph{many} local models, even within a single meta-task (say for all kinds of different coding tasks).
Then, for \emph{each} prompt at test-time, TTMM merges the most related local models.
As such, TTMM can be seen as bridging multitask model merging and TTT by dynamically constructing an expert model at test-time from an order of magnitude more meta-tasks than in multitask learning.\looseness=-1

We contribute the following key findings, summarized in \cref{fig:results}: \begin{enumerate}\vspace{-3pt}
    \item \textbf{TTMM outperforms multitask model merging:} We find that training and merging \emph{many} expert models, even within a single domain such as ``coding in Python'', outperforms training and merging \emph{few} models specialized on broader tasks.\looseness=-1
    \item \textbf{TTMM approaches performance of TTT without almost any test-time overhead:} We discuss how to merge the parameters of the expert models at almost no additional compute or memory cost.
    We then evaluate TTMM on the Wikipedia and GitHub Python corpora and find that it achieves performance close to TTT, while matching the performance of ensembling without incurring its test-time overhead.\looseness=-1
\end{enumerate}

\section{Related Work}\label{sec:related_work}

We propose TTMM as an efficient amortization of TTT, which realizes comparable accuracy.
We next discuss how TTMM can be seen as an extension to the literature on model selection and model merging.
In \cref{sec:additional_related_work}, we discuss the link to retrieval-augmented generation.\looseness=-1

\paragraph{Multitask Model Merging.}
Combining the predictions of multiple models has been a long-standing practice in machine learning, e.g., in the form of majority voting~\citep{wang2022self} or by ensembling of predictions~\citep{dietterich2000ensemble,wortsman2022model}, which has been used to avoid catastrophic forgetting~\citep[e.g.,][]{bagatella2025active}.
However, merging predictions of many neural networks is computationally inefficient since it requires a separate forward pass for each model.
Instead, TTMM merges in parameter-space, and only requires a single forward pass of the merged model, resulting in a negligible increase of compute and memory cost compared to inferencing a single model.
The idea of merging multiple models by averaging their parameters (not their predictions!) is grounded in the surprising phenomenon of \emph{mode-connectivity}~\citep{garipov2018loss,wilson2025deep}, which observes that modes in ensembles of models are connected by paths of small loss.
This has motivated a variety of model merging techniques, including stochastic weight averaging~\citep{izmailov2018averaging} and model soups~\citep{wortsman2022model}.
Recently, model merging has been studied predominantly in the context of merging \emph{few} expert models with the aim of retaining performance on their respective meta-tasks~\citep{ilharco2022editing,matena2022merging,jin2022dataless,ainsworth2022git}.
Merging model parameters may lead to interference, which has been a key focus of recent work~\citep{yadav2023ties,yang2023adamerging,tang2023parameter,daheim2023model,yu2024language,jung2024tint}.\looseness=-1

\paragraph{Dynamic Merging of Multitask Models.}
Recent work in multitask model merging studies selecting merging coefficients dynamically depending on the prompt.
\cite{oh2024dawin} select coefficients based on the entropy of the individual model's predictions, which is computationally expensive since it requires forward passes of all networks.
Instead, TTMM computes merging coefficients based on a measure of similarity between prompt and experts which is tuned on holdout data at train-time~\citep{lu2025twin,tang2024merging,cheng2024dam,lai2025mediator}, and which incurs only negligible test-time overhead.
In the tangentially related literature on black-box model fusion, dynamic model selection has been widely studied~\citep[e.g.,][]{liu2021dexperts,jiang2023llm,mavromatis2024pack}.
However, in the setting of TTMM, we control data and architecture used for training the local models, which enables us to compute merging coefficients more efficiently than in black-box model fusion.\looseness=-1

\paragraph{Clustering and Dynamic Model Selection.}
While the above literature focuses on multitask learning with typically only a handful of expert models, TTMM clusters the training data into \emph{many} local neighborhoods and trains a separate local model for each cluster.
Like TTMM, Branch-Train-Merge~\citep[BTM,][]{li2022branch,gururangan2023scaling} clusters the training data into local neighborhoods, trains a separate local model for each cluster, and dynamically selects a subset of these models at test-time.
However, unlike BTM, TTMM uses an order of magnitude more local models and merges their parameters instead of their predictions, leading to substantially faster inference.
In concurrent work, \cite{qiu2025mingle} study TTMM in a continual learning setting.\looseness=-1

\paragraph{Mixture of Experts.}
The mixture of experts \citep[MoE,][]{shazeer2017outrageously} paradigm increases model capacity without increasing inference cost by introducing conditional routing of inputs to subsets of learnable parameters (i.e., ``experts'').
Training MoE models is challenging due to the need for joint training of the experts and the router on substantial multitask data.
Moreover, MoE models evaluate all active experts individually, which can still require substantial memory.
This in turn implies that the number of experts is typically limited to a small number.
Recent studies have focused on challenges such as reducing synchronization~\citep{sukhbaatar2024branch,zhang2024bam,zhang2025bts}, load balancing of experts~\citep{clark2022unified,zhou2022mixture}, and improving expert specialization~\citep{dai2024deepseekmoe}.
In contrast, TTMM approximates the MoE paradigm by embarrassingly parallelizing the training of the local models and merging model parameters at test-time, which permits scaling to many orders of magnitude more experts and which enables approximation of the ``maximally local'' TTT.\looseness=-1

\section{TTMM: Test-Time Model Merging}

Our proposed method, TTMM, has two main stages: train-time and test-time.
At train-time, TTMM clusters the training data into many local neighborhoods and trains a separate expert model for each cluster.
Then at test-time, TTMM dynamically composes these expert models to form a single task-specific model.
We discuss both stages in more detail below.\looseness=-1

\begin{figure}
    \centering
    \includegraphics[width=0.8\textwidth]{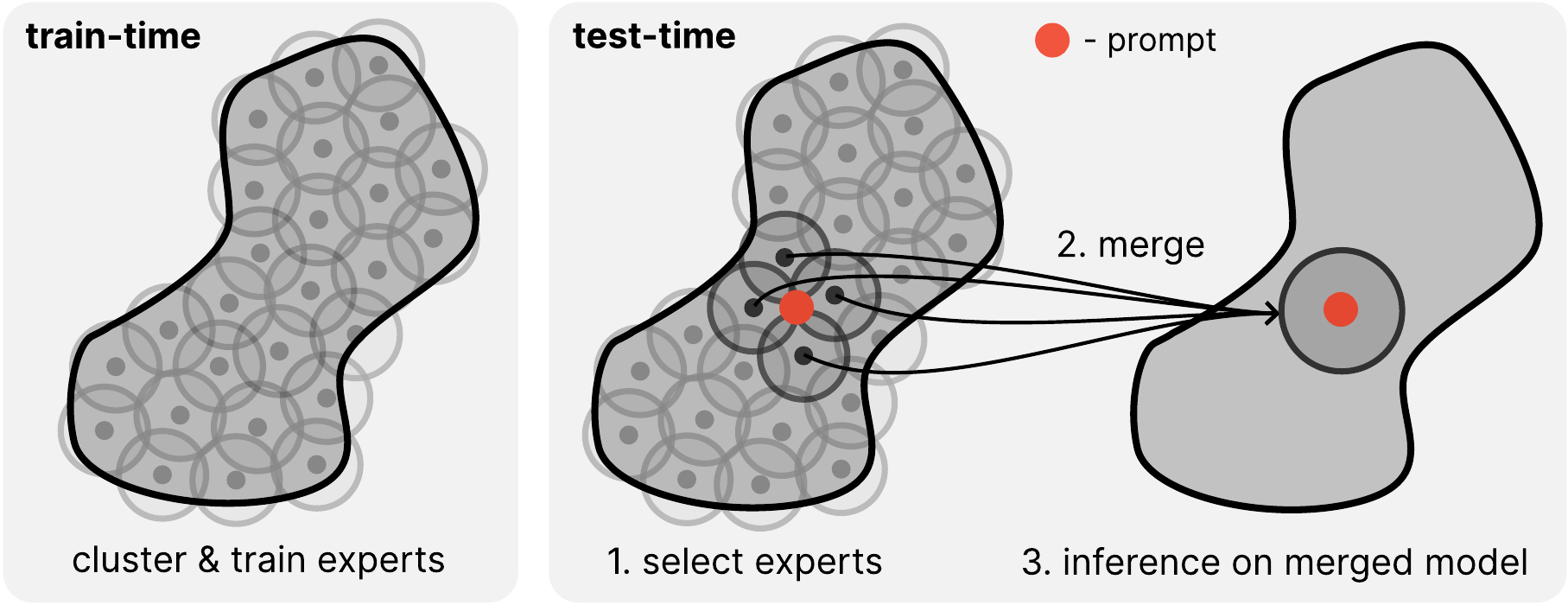}
    \caption{Illustration of \textbf{TTMM}: At \emph{train-time} (\cref{alg:ttmm_train_time}), TTMM clusters the training data (shown in \textcolor{gray}{gray}) into many local neighborhoods and trains a separate expert model for each cluster. At \emph{test-time} (\cref{alg:ttmm_test_time}), TTMM dynamically selects a subset of expert models related to the prompt and merges their parameters to form a single task-specific model. The test-time stage is compute and memory efficient, using the previously trained experts.\looseness=-1}
    \label{fig:ttmm}
\end{figure}

We consider a domain~$\spX$ of token sequences and a training set $\spD \subseteq \spX$.
We further assume that we have access to a pre-trained autoregressive language model~$f(\vx; \vtheta)$ that maps token sequences $\vx \in \spX$ to probability distributions over the next token.
Finally, we assume access to a sequence embedding model~$\vphi(\vx)$ returning normalized embeddings.
In our experiments, we use a mean-pooled sequence embedding model, but one could also use the last-layer token embeddings from the pre-trained language model~\citep[see][]{lu2025twin}.\looseness=-1

\begin{wrapfigure}{r}{0.525\textwidth}
\vspace{-5ex}
\begin{minipage}{0.525\textwidth}
    \begin{algorithm}[H]
    \caption{TTMM at train-time}
    \label{alg:ttmm_train_time}
    \small
    \begin{algorithmic}[1]
        \Require{
        Language model $f(\vx; \vtheta)$ with pre-trained weights $\vtheta$; sequence embedding model $\vphi(\vx)$; training set $\spD$; number of clusters $K$.
        }
        \vspace{\baselineskip}

        \State Cluster training set $\spD$ into $K$ clusters $\spD_1, \dots, \spD_K$.
        \State Train a LoRA adapter $\vtheta_k$ on each cluster $\spD_k$.
        \State Compute cluster-specific embeddings:
        \vspace{-1.5ex}\[
            \textstyle \vphi_k \gets \mathrm{normalize}\parentheses*{\frac{1}{|\spD_k|} \sum_{\vx \in \spD_k} \vphi(\vx)}
        \]\vspace{-2ex}
    \end{algorithmic}
    \end{algorithm}
\end{minipage}
\vspace{-3ex}
\end{wrapfigure}

\paragraph{Train-time:~Clustering.}
To cluster the training dataset, we use bisecting $k$-means, a hierarchical clustering algorithm which recursively applies $k$-means with $k=2$ to the cluster with the largest diameter~\citep{jain2010data}.
We use bisecting $k$-means since it is more efficient than $k$-means and tends to lead to more balanced clusters, however, any clustering algorithm can be used.
We then train a small LoRA adapter on each cluster for one epoch.
Finally, we average the embeddings of the sequences in each cluster to obtain cluster-specific embeddings (i.e., ``centroids'') that represent the capabilities of the corresponding expert model.
We renormalize the centroids to avoid biasing the embedding norm of larger clusters toward zero.
Without the additional normalization, centroids of larger clusters have smaller norm despite~$\vphi(\vx)$ being normalized.\looseness=-1

\begin{wrapfigure}{r}{0.525\textwidth}
\vspace{-4.5ex}
\begin{minipage}{0.525\textwidth}
    \begin{algorithm}[H]
    \caption{TTMM at test-time}
    \label{alg:ttmm_test_time}
    \small
    \begin{algorithmic}[1]
        \Require{
            Prompt $\vxs$ with embedding $\vphi^\star$. Language model $f(\vx; \vtheta)$ with pre-trained weights $\vtheta$; expert models $\smash{\{(\vtheta_k, \vphi_k)\}_{k=1}^K}$; temperature~$\beta$ and sparsity~$\tau$ (tuned on holdout data).
        }
        \vspace{\baselineskip}

        \State Compute cluster-specific merging coefficients:
        \vspace{-1.5ex}\[
            \textstyle w_k \gets \ssoftmax_\tau\parentheses*{\frac{1}{\beta} \vphi_k^\top \vphi^\star}
        \]\vspace{-2.5ex}
        \State Merge into a single expert model:
        \vspace{-1.5ex}\[
            \textstyle \vtheta^\star \gets \sum_{k=1}^K w_k \vtheta_k
        \]\vspace{-3ex}
        \State Use $f(\vxs; \vtheta^\star)$ to generate the next token.
    \end{algorithmic}
    \end{algorithm}
\end{minipage}
\vspace{-4ex}
\end{wrapfigure}

\paragraph{Test-time: Model Merging.}
At test-time, given a prompt $\vxs \in \spX$ and its embedding $\vphi^\star \defeq \vphi(\vxs)$, we compute merging coefficients $w_k$ for each expert model $\vtheta_k$ using a sparse cross-attention mechanism with sparsity parameter $\smash{\tau \in [0,\frac{1}{K})}$: \begin{align*}
  \begin{multlined}[t]
      \ssoftmax_\tau(\vz) \defeq \\ \frac{\relu(\softmax(\vz) - \tau)}{\sum_k \relu(\softmax(\vz)_k - \tau)}
  \end{multlined}
\end{align*} where $\relu(\vz) = \max\{0, \vz\}$.
This sparse softmax returns a sparse distribution over the experts, where experts with weight below $\tau$ are pruned, and the remaining weights are renormalized.
Pruning experts with low weights improves test-time efficiency since less expert parameters need to be merged~(see~\figref{fig:efficiency}{right}), with minimal degradation of accuracy.
The sparsity being smaller than $\nicefrac{1}{K}$ ensures that always at least one expert is selected.
The model selection can easily be parallelized for all clusters and multiple queries by writing the computation as a cross-attention matrix product: \begin{align*}
    \mTheta^\star \gets \ssoftmax_\tau\parentheses*{\frac{1}{\beta} \mPhi^\star \mPhi^\top} \mTheta
\end{align*} with keys $\mPhi \defeq [\vphi_1, \dots, \vphi_K]^\top \in \R^{K \times d}$, queries $\mPhi^\star \defeq [\vphi_1^\star, \dots, \vphi_n^\star]^\top \in \R^{n \times d}$, and values $\mTheta \defeq [\vtheta_1, \dots, \vtheta_K]^\top \in \R^{K \times p}$, computing the collection of $n$ merged expert models $\mTheta^\star \in \R^{n \times p}$.\looseness=-1

\subsection{Latency}

In contrast to MoE models, the number of experts in TTMM is typically too large for all experts to fit in GPU memory at once.
For example, with Llama-3.2-1B as base model, each LoRA adapter has size approximately~$172$~MB, and with $1000$ clusters, the expert models would require prohibitive $172$~GB of GPU memory.
Instead, we dynamically load active experts into GPU memory and merge their parameters before inference.\looseness=-1

\begin{figure}
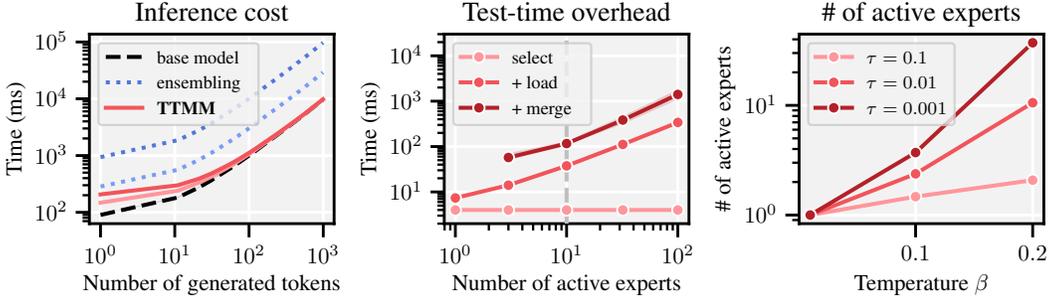

  \vspace{-2.5ex}
    \centering
    \incplt{efficiency}
    \vspace{-2.5ex}
    \caption{
        TTMM has negligible compute overhead at test-time. \textbf{Left:}~Inference cost of ensembling and TTMM with $10$ (dark) and $3$ (light) active experts, respectively. TTMM with $10$ experts costs around $20$ generated tokens, while with $3$ experts, TTMM costs only about $10$ generated tokens. In contrast, ensembling has multiplicative overhead that grows with the number of active experts. \textbf{Middle:}~Test-time overhead of TTMM compared to inference with the base model. When using up to $10$ experts, the test-time overhead is small. \textbf{Right:}~Number of active experts with a given temperature~$\beta$ and sparsity~$\tau$ on the Wikipedia dataset with $K=1000$ experts. \textbf{Setting:}~We use Llama-3.2-1B as base model and consider LoRA adapters with rank $64$. Each LoRA adapter has size $\approx\!172$~MB. For loading, we assume a CPU to GPU bandwidth of $50$~GB/s. For computing embeddings, we use the \texttt{all-mpnet-base-v2} model. We measure selection and merging on an RTX 4090 GPU with $10$ seeds.\looseness=-1}
    \label{fig:efficiency}
\end{figure}

The test-time overhead of TTMM consists therefore of the time required to compute the merging coefficients~(\emph{select}), the time to load active experts from CPU to GPU memory~(\emph{load}), and the time to merge the expert parameters~(\emph{merge}).
In \figref{fig:efficiency}{middle}, we benchmark this overhead across varying numbers of active experts.
On our infrastructure and with $10$~active experts, selection takes roughly $5$~milliseconds, loading takes around $30$~milliseconds, and merging takes approximately $80$~milliseconds.
We use PyTorch's~\texttt{einsum}~\citep{paszke2019pytorch} for computing the merged LoRA adapter $\Delta\mW = \sum_k w_k \mB_k \mA_k$~(cf.~\cref{fig:merging_code} in the appendix), with $\mA_k$ and $\mB_k$ the low-rank factors of the LoRA adapter $\vtheta_k$.
This is around $20\%$ faster than the naive implementation with a for-loop over the experts, without any additional memory overhead.
We anticipate that test-time overhead may be reduced further by interleaving loading and merging by overlapping computation and communication~\citep[see, e.g.,][]{tang2025merging}.\looseness=-1

\paragraph{TTMM costs $\approx$20 tokens.}
In \figref{fig:efficiency}{middle}, we evaluate the practical test-time overhead of TTMM for language generation.
TTMM incurs only a small fixed overhead which, when merging $10$ experts, is equal to generating around $20$ tokens.
When generating more than $100$ tokens, the overhead of TTMM compared to inference with the base model is negligible.
In contrast, ensembling has a multiplicative overhead that grows with the number of active experts.
Comparing to TTT, a single test-time gradient step with Llama-3.2-1B takes approximately $150$~milliseconds~\citep{huebotter2025efficiently}.
With $100$ test-time gradient steps, the total test-time overhead (without any data selection and loading) is already around $15$~seconds.
In contrast, \emph{TTMM with $10$ active experts has a constant overhead of around $115$~milliseconds, which is a more than $125\times$~speedup compared to TTT}.\looseness=-1

\paragraph{Balancing Latency and Performance.}
The exact Pareto frontier of latency and performance varies depending on generation length. \Cref{fig:results} plots the Pareto frontier for 100 generated tokens and \figref{fig:efficiency}{left} shows how the latency changes for other generation lengths. We do not show TTT in \figref{fig:efficiency}{left} since the constant overhead of TTT before generating any token is larger than the time to generate 1k tokens with TTMM (and 10 active experts).\looseness=-1

\subsection{Test-Time Model Merging approximates Test-Time Training}\label{sec:approximation}

TTMM can be seen as an approximation of TTT that amortizes TTT's compute cost at train-time.
We formalize this connection by showing that in the limit of exponentially many experts, TTMM closely approximates TTT~\citep{hardt2023test}, which fine-tunes the pre-trained model on the $N$ nearest neighbors to the prompt in the training data.
To see the intuition behind this approximation, first consider the set of all subsets of $\spD$ of size $N$.
Suppose we pre-train an expert model on each of these subsets.
Then, at test-time, we select the expert model whose training data embeddings have the shortest distance to the prompt embedding.
Note that this is \emph{equivalent} to TTT, without the need for fine-tuning at test-time.
We have simply pre-computed all possible expert models at train-time, and at test-time fetch the right one.
TTMM approximates this interpretation of TTT in three steps.\looseness=-1

\paragraph{Approximation 1: Fewer amortized experts.}
The number of pre-trained expert models in the above scenario is not practically feasible, as it grows exponentially with the training data.
Thus, TTMM considers a more practical scenario with a smaller number of clusters where we do not pre-compute all possible TTT models.
Our following proposition indicates that even an expert model different to the TTT model, yet trained on a cluster $\spD'$ sufficiently close to the prompt embedding, may still be a good approximation of TTT:\looseness=-1

\begin{informalproposition}[Formalized in \cref{prop:approx_ttt}]\label{prop:approx_ttt:informal}
      Suppose the neural network $f(\vx; \vtheta)$ is $L$-Lipschitz in $\vtheta$ and that the loss is $G$-smooth in $\vx$, with any $L, G > 0$.
      Further, suppose we adapt models via $T \geq 1$ steps of gradient descent with step size $\eta > 0$ from a shared initialization~$\vtheta$.
      For any prompt $\vxs \in \spX$ and $N \geq 1$, let $\vtheta_{\vxs}$ be the TTT model trained on the set $\spD_{\vxs}$ of the $N$ nearest neighbors to~$\vphi(\vxs)$ in $\spD$.
      Let $\vtheta'$ be the expert model trained on any $\spD' \subseteq \spD$.
      Then, if $\spD'$ contains at least one nearest neighbor from $\spD_{\vxs}$, \begin{align}
        \norm{f(\vxs; \vtheta_{\vxs}) - f(\vxs; \vtheta')}
        \leq \eta T L G (\diam(\spD_{\vxs}) + \diam(\spD')).
    \end{align}
\end{informalproposition}

To understand the intuition behind this proposition, consider an expert model trained on a cluster $\spD'$ that contains parts of the local neighborhood of the prompt embedding (for this, the entire clustering must \emph{cover} the dataset).
Then, the above proposition shows that this expert model is a good approximation of the TTT model if the diameter of $\spD'$ is also~\emph{small}.
Consequently, this motivates the approximation of TTT by TTMM with many expert models \emph{covering} the entire dataset, each trained on a \emph{small} cluster.
Training expert models on this partitioning of the dataset does not cost more FLOPs or GPU memory than training a single model on the entire dataset.
This contrasts with TTT, which considers a separate expert model for each possible recombination of the training data into subsets of size $N$.\looseness=-1

\paragraph{Approximation 2: Summarizing experts with their centroids.}
Instead of selecting the expert model by minimizing the sum of distances of individual data points to the prompt embedding~(i.e.,~$\smash{\sum_{\vx \in \spD_k} \norm{\vphi(\vxs) - \vphi(\vx)}}$), TTMM summarizes each expert model by its centroid $\smash{\nicefrac{1}{\abs{\spD_k}} \sum_{\vx \in \spD_k} \vphi(\vx)}$.
Selecting the expert with the closest centroid to the prompt embedding further speeds up inference, since we only need to compute $K$ distances, rather than the distances to all training data embeddings.
This simplification does not come for free: Consider the one-dimensional set of embeddings~$\{-1,0,1\}$ and a prompt embedding of~$0$ with~$N = 2$.
Then, the optimal cluster is either~$\{-1,0\}$ or~$\{0,1\}$, yet TTMM would pick~$\{-1,1\}$ since its centroid averages out differences in the embeddings.
Since TTMM first clusters the data, there is reason to expect that each cluster has small diameter, in which case the centroid is a good approximation of the individual data points.\looseness=-1

\paragraph{Approximation 3: Merging multiple experts.}
As a final step, to compensate for the approximation error of the chosen expert model induced by steps (1) and (2), TTMM merges multiple expert models.
Intuitively, and as visualized in \cref{fig:ttmm}, this is motivated by multiple experts being able to cover distinct features of the prompt that may not all be captured by a single expert model.
We find empirically that merging a small number of experts can improve model performance without incurring a significant test-time overhead~(cf.~\cref{fig:results}).
TTMM computes merging coefficients via cross-attention, which is equivalent to weighting the active experts with an RBF kernel:\footnote{See \cref{sec:cross_attention_vs_rbf} for a proof.} \[
  \softmax\parentheses*{\frac{1}{\beta} \vphi_k^\top \vphi^\star} = \mathrm{normalize}\parentheses*{\exp\parentheses*{-\frac{1}{2\beta} \norm{\vphi^\star - \vphi_k}_2^2}}.
\]
Weighting data (not models!) by an RBF kernel has long been a common approach in machine learning and used, e.g., in kernel density estimation~\citep{rosenblatt1956remarks,parzen1962estimation}, kernel regression~\citep{nadaraya1964estimating,watson1964smooth}, RBF networks~\citep{lowe1988multivariable,moody1989fast}, and local learning~\citep[e.g.,][]{bottou1992local,atkeson1997locally}.
TTMM additionally sparsifies the selected models to minimize the test-time compute overhead due to loading and merging.\looseness=-1

\section{Results}

\begin{table}
  \vspace{-1ex}
  \centering
  \renewcommand{\arraystretch}{1.25}
  \setlength{\tabcolsep}{8pt}
  \begin{tabular}{lcccc}
      \toprule
      \multirow{2}{*}{\textbf{Model}} & \multicolumn{2}{c}{\textbf{Wikipedia}} & \multicolumn{2}{c}{\textbf{GitHub (Python)}} \\
      \cmidrule(lr){2-3} \cmidrule(lr){4-5}
      & \textit{Merging} & \textcolor{gray}{\textit{Ensembling}} & \textit{Merging} & \textcolor{gray}{\textit{Ensembling}} \\
      \midrule
      \textit{Llama-3.2-1B} & \multicolumn{2}{c}{$8.674$} & \multicolumn{2}{c}{$2.611$} \\
      \hdashline
      \quad + fine-tuning & \multicolumn{2}{c}{$7.849$} & \multicolumn{2}{c}{$2.581$} \\
      \quad + TTMM (1 active expert) & \multicolumn{2}{c}{$7.669$} & \multicolumn{2}{c}{$2.552$} \\
      \quad + TTMM (3 active experts) & 7.571 & \textcolor{gray}{7.579} & 2.519 & \textcolor{gray}{2.512} \\
      \rowcolor{lightgray}
      \quad + \textbf{TTMM} (10 active experts) & \underline{\textbf{7.510}} & \textcolor{gray}{\underline{7.509}} & \textbf{2.492} & \textcolor{gray}{2.464} \\
      \hdashline
      \quad \textcolor{gray}{+ TTT} & \multicolumn{2}{c}{\textcolor{gray}{${7.559}$}} & \multicolumn{2}{c}{\textcolor{gray}{$\underline{2.441}$}} \\
      \midrule
      \textit{Qwen2.5-1.5B} & \multicolumn{2}{c}{8.570} & \multicolumn{2}{c}{2.335} \\
      \hdashline
      \quad + fine-tuning & \multicolumn{2}{c}{7.702} & \multicolumn{2}{c}{2.317} \\
      \quad + TTMM (1 active expert) & \multicolumn{2}{c}{7.166} & \multicolumn{2}{c}{2.330} \\
      \quad + TTMM (3 active experts) & 7.101 & \textcolor{gray}{7.096} & 2.304 & \textcolor{gray}{2.294} \\
      \rowcolor{lightgray}
      \quad + \textbf{TTMM} (10 active experts) & \underline{\textbf{7.080}} & \textcolor{gray}{\underline{7.061}} & \textbf{2.287} & \textcolor{gray}{2.257} \\
      \hdashline
      \quad \textcolor{gray}{+ TTT} & \multicolumn{2}{c}{\textcolor{gray}{7.231}} & \multicolumn{2}{c}{\textcolor{gray}{\underline{2.177}}} \\
      \bottomrule
  \end{tabular}
  \caption{Perplexity (lower is better) across evaluation datasets and base models with $100$~experts. \textbf{Bold}~numbers denote the best test-time efficient method, and \underline{underlined}~numbers denote the best overall. TTMM outperforms standard multitask fine-tuning of a single model.\looseness=-1}
  \label{table:main}
\end{table}

We evaluate language modeling with causal language models.
We report the perplexity of the models on holdout data from the Wikipedia~\citep[English,][]{wikidump} and GitHub~(Python code) corpora, comprising 6.4M and 7.2M documents, respectively.
As base models, we use Llama-3.2-1B~\citep{grattafiori2024llama} and Qwen2.5-1.5B~\citep{yang2024qwen2}.
For clustering and model selection, we use embeddings from the \texttt{all-mpnet-base-v2} model~\citep{reimers2019sentence,song2020mpnet}.
Following \cite{hardt2023test} and \cite{huebotter2025efficiently}, we evaluate TTT by training a LoRA adapter for a single gradient step each on the $100$ nearest neighbors to the prompt embedding, most to least similar.
We also compare against a single model, which is fine-tuned for a single epoch on the respective corpus.
This fine-tuned model can be considered an upper bound on the performance of approaches to multitask learning, since multitask learning typically trains one model per meta-task (i.e., per corpus) and merging models from multiple meta-tasks typically reduces performance on any individual meta-task.\looseness=-1

Our code is available at \url{https://github.com/rbertolissi/ttmerge}.
We share fine-tuned baselines and TTMM-MoEs on \href{https://huggingface.co/collections/rbertolissi/test-time-model-merging-ttmm-6886dec2c436cc4ceaf39ff7}{Huggingface}.\looseness=-1 %

\paragraph{\insight TTMM approaches the accuracy of TTT at almost no test-time overhead.}
We show in \cref{fig:results} and \cref{table:main} that TTMM with $10$ active experts achieves close to the accuracy of TTT, and much higher accuracy than a single model that was fine-tuned on the respective corpus.
Instead of merging experts in parameter space, we also evaluate merging experts in prediction space by ensembling the experts.
We find that this yields marginally higher accuracy, yet it has a significantly larger test-time overhead.
Overall, merging experts in parameter space achieves the best trade-off between accuracy and test-time efficiency.
Interestingly, TTMM outperforms TTT on the Wikipedia dataset. This may be because TTMM can incorporate knowledge from a larger set of experts, each trained on a few thousands of data points. In contrast, TTT is only training on the 100 most related neighbors, which may miss some information if there are more than 100 related data points. In our view, the empirical difference between TTMM and TTT is an interesting direction for further study.\looseness=-1

\begin{figure}
  \vspace{-2ex}
  \centering
  \incplt[\textwidth]{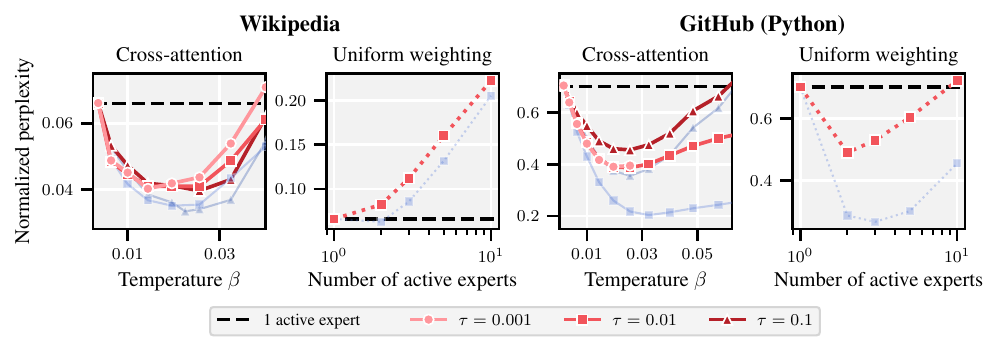}
  \vspace{-2.5ex}
  \caption{
    On each dataset, we evaluate the performance improvement of TTMM's cross-attention mechanism compared to uniform weighting of active experts.
    Uniform weighting (dotted) selects the experts closest to the prompt embedding, and weights them equally.
    We see across merging (\textcolor{red}{red}) and ensembling (\textcolor{blue}{light blue}) that the cross-attention mechanism outperforms uniform weighting significantly.
    Notably, on Wikipedia, combining multiple experts helps only when using cross-attention.
    The number of experts is $K = 1000$.\looseness=-1}
  \label{fig:merging_coefficient}
  \vspace{-1ex}
\end{figure}

\paragraph{\insight Cross-attention is more effective than other merging coefficients.}
In \cref{fig:merging_coefficient}, we evaluate perplexity with merging coefficients computed via cross-attention to uniform weighting of active experts.
We find that cross-attention significantly outperforms uniform weighting.
In particular, on the Wikipedia dataset, merging multiple experts does not help at all when weighting them uniformly.
We also compare against merging coefficients, such as leveraging the logit-entropy of the experts~(cf.~\cref{sec:alternative_merging}), but we do not find them to outperform TTMM's cross-attention while being significantly more computationally expensive.
We find that sparsity $\tau=0.01$ provides the best trade-off between accuracy and test-time efficiency, and we fix it in all other experiments.
Furthermore, on our evaluation corpora, we need to select roughly 10 active experts to cover the local neighborhood of the prompt~(cf.~\figref{fig:clusters}{right}), which is also when the performance of TTMM begins to saturate.\looseness=-1

\NewDocumentCommand{\incpltcluster}{O{\columnwidth}m}{%
  \begin{center}
    \adjustbox{center}{\adjustbox{width=#1+10pt}{
      \includegraphics[width=0.28\textwidth]{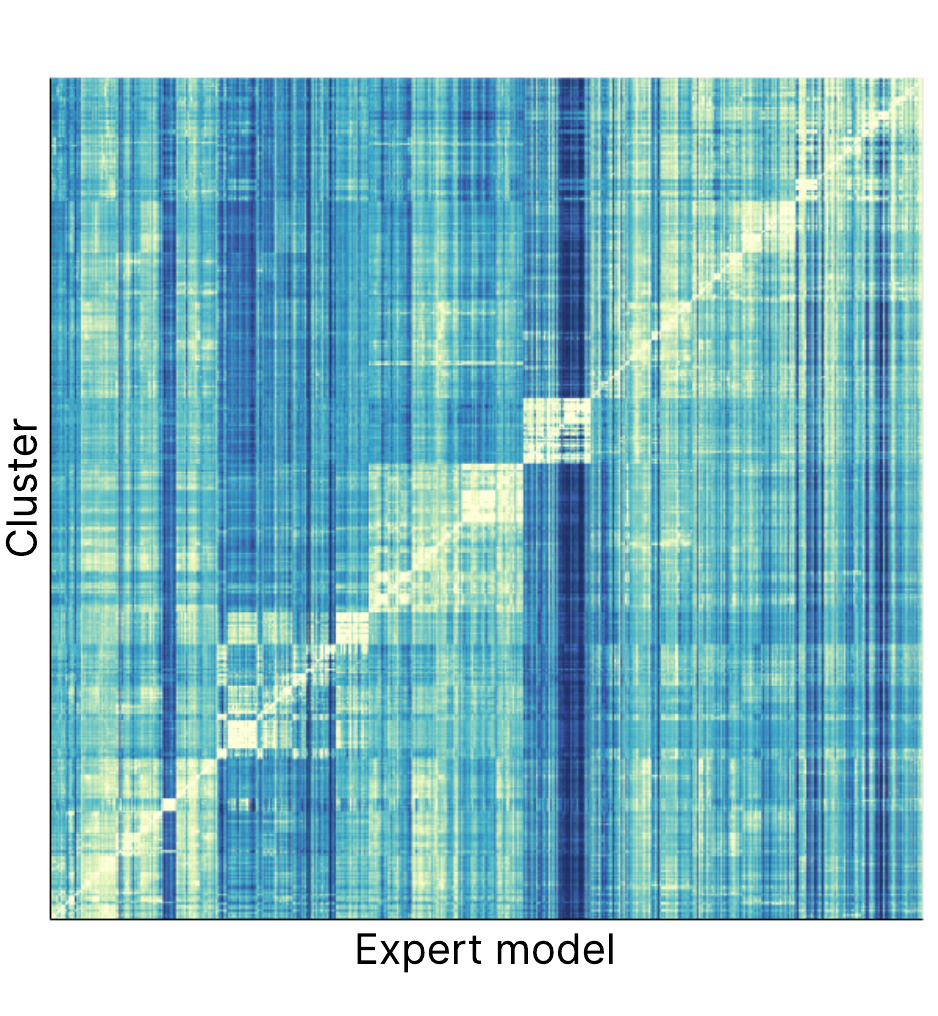}
      \hfill
      \includegraphics[width=0.6\textwidth]{./plots/output/#2.pdf}
    }}
  \end{center}
}
\begin{figure}
  \vspace{-2.5ex}
  \incpltcluster[\textwidth]{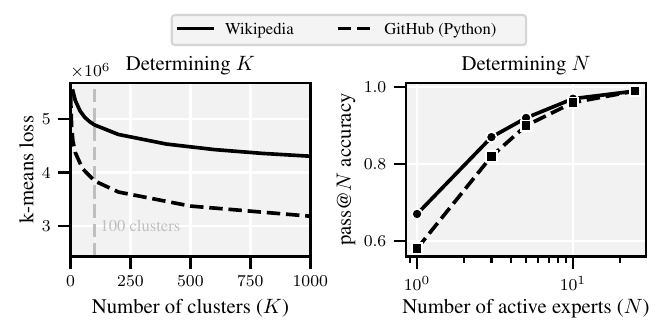}
  \vspace{-2.5ex}
  \caption{
    \textbf{Left:}~Perplexity~(brighter is better) of $1000$ experts on clusters of the Wikipedia dataset. The lowest perplexity is achieved on the diagonal, i.e., when the expert model is evaluated on holdout data from the cluster it was trained on. The block-diagonal structure is an artifact of bisecting k-means, which leads to similar neighboring clusters.
    \textbf{Middle:}~In our evaluation datasets, the rate of reduction in k-means loss is largest until around $100$ clusters.
    \textbf{Right:}~For 100 random samples from each cluster's holdout set, we measure whether the correct cluster is among the top-$N$ closest clusters and report the average pass@$N$ accuracy.\looseness=-1}
  \label{fig:clusters}
\end{figure}

\paragraph{\insight Local experts outperform the base model in their local neighborhood.}
We find in \figref{fig:clusters}{left} that expert models perform significantly better on holdout data in their cluster than the base model or other expert models.
This reaffirms the earlier findings of \cite{hardt2023test} that TTT on local neighborhoods improves accuracy.
We also find a large qualitative difference between the text generated by different experts and the base model, and we include examples in \cref{sec:qualitative_examples}.\looseness=-1

\paragraph{Choosing the number of (active) experts.}

We find, as summarized in \cref{fig:results}, that TTMM with $100$ experts significantly outperforms TTMM with $10$ experts or the multitask fine-tuned model (i.e., a single expert).
This indicates that scaling the number of experts beyond the number of meta-tasks can increase accuracy substantially.
Nevertheless, we also find that scaling the number of experts to $1000$ does not further improve accuracy on our evaluation corpora.
In the following, we discuss considerations for the number of experts in TTMM:
\begin{itemize}\vspace{-3pt}
    \item \textbf{Number of experts ($K$):} The total number of experts has almost no direct effect on latency. However, more experts require a large amount of static CPU memory. Further, increasing the number of experts also increases the separation of knowledge/skills between experts. This then requires more active experts to retain the same performance, which also increases latency and may lead to model interference due to merging. To prevent knowledge fragmentation, $K$ should be determined depending on the training set. In our experiments, choosing 100 experts is indicated by the ``elbow method'', common in clustering~(cf.~\figref{fig:clusters}{middle}).\looseness=-1

    \item \textbf{Number of active experts ($N$):}
    The number of active experts has the strongest effect on latency, since all active experts have to be loaded from CPU into GPU memory before inference. Our ablation in \figref{fig:clusters}{right} indicates that with $10$ active experts, the ``correct'' expert is likely to be active, while latency remains minimal.\looseness=-1
\end{itemize}\vspace{-3pt}

\paragraph{Evaluation on MMLU.}

We additionally evaluate TTMM on MMLU~\citep{hendrycks2021measuring} as a non-perplexity task.
We use (non-instruct) Llama-3.2-1B as base model, which we then fine-tune for one epoch on the MMLU training set.
This fine-tuned model achieves approximately the same performance as instruction-tuned Llama-3.2-1B.
We use the non-instruct model to avoid any confounding effect of RL-training or instruction-tuning on privileged data from the instruct model. As embedding model, we use Stella (\texttt{stella\_en\_400M\_v5}) but obtain similar results with MPNet~(cf.~\cref{sec:embedding_model_ablation}). We train 100 experts on the MMLU training set, using the globally fine-tuned model as initialization.
We summarize the results in \cref{table:mmlu} and find that TTMM slightly increases accuracy.
A broader evaluation on other non-perplexity tasks and non-language domains is an important direction for future work.\looseness=-1

\begin{table}
  \centering
  \renewcommand{\arraystretch}{1.25}
  \setlength{\tabcolsep}{8pt}
  \begin{tabular}{lcccccc}
    \toprule
    \textbf{Model} & \textbf{Humanities} & \textbf{Social Sciences} & \textbf{STEM} & \textbf{Other} & \textbf{Overall} \\
    \midrule
    Fine-tuned            & 44.8   & 54.24 & 40.79 & 54.43 & 48.10 \\
    TTMM (1 expert)       & 45.25  & 54.92 & 40.98 & 54.30 & 48.41 \\
    TTMM (3 experts)      & 45.33  & 55.02 & 40.98 & 54.39 & 48.48 \\
    TTMM (10 experts)     & \textbf{45.63} & 55.15 & 41.39 & 54.55 & 48.74 \\
    TTMM (15 experts)     & 45.46  & \textbf{55.31} & \textbf{41.77} & \textbf{55.26} & \textbf{48.96} \\
    \bottomrule
  \end{tabular}
  \caption{Accuracy (in \%) of TTMM on MMLU. \textbf{Bold} numbers denote the highest accuracy.}
  \label{table:mmlu}
\end{table}

\paragraph{Ablations.}

We perform a series of ablations, which we summarize in \cref{sec:ablations}.
Notably, we find that (1) larger embedding models need not lead to improved performance of TTMM and that (2) TTMM performs similarly when fixing the number of experts as to when selecting the number of active experts dynamically via thresholding.\looseness=-1

\section{Discussion and Future Work}

We propose \emph{\textbf{T}est-\textbf{T}ime \textbf{M}odel \textbf{M}erging}~(TTMM) which scales the mixture of experts paradigm to orders of magnitude more experts than in state-of-the-art multitask methods, leading to a substantial improvement in language modeling ability.
We find that TTMM can achieve performance gain close to TTT by merging \emph{few} active experts into a single model per query.
TTMM achieves this with almost the same test-time cost as inference with the base model, and without increasing train-time FLOPs or GPU memory compared to multitask fine-tuning.
A limitation of TTMM is the additional CPU memory required for storing all experts.\looseness=-1

In achieving close to the performance of TTT without nearly any test-time overhead, TTMM opens up several exciting directions for future research.
First, TTMM makes it feasible to re-select models online during generation or evaluation based on the growing prefix, which with TTT, is too expensive in most applications.
\cite{gururangan2023scaling} have already shown that online re-selection can work with an ensemble selected from a smaller number of experts.
Second, one may further improve the performance of TTMM using tools from the literature on model merging and MoEs aimed at avoiding interference between experts and improving the routing of data to experts, e.g., by fine-tuning the router on holdout data.
TTMM can further be applied to settings with privileged information by adding or removing experts dynamically based on user access.
Furthermore, it may be possible to leverage the large number of experts in TTMM for uncertainty estimation, e.g., by tracking also the variance of the merged parameters.
Finally, many reasoning language models~(called RLMs) such as DeepSeek~R1~\citep{guo2025deepseek} are based on a MoE architecture, and TTT has been shown to be able to improve RLMs~\citep{zuo2025ttrl}.
Since TTMM can be considered a scalable MoE architecture that approximates TTT, it would be particularly interesting to study whether TTMM can improve reasoning models.\looseness=-1

\section*{Acknowledgments}

This project was supported in part by the European Research Council (ERC) under the European Union's Horizon 2020 research and Innovation Program Grant agreement no. 815943, and the Swiss National Science Foundation under NCCR Automation, grant agreement 51NF40 180545. Ido Hakimi was supported by an ETH AI Center Postdoctoral fellowship.\looseness=-1

\bibliography{colm2025_conference}
\bibliographystyle{colm2025_conference}

\clearpage\appendix

\section*{\LARGE Appendices}

\startcontents
\printcontents{}{0}[2]{}

\clearpage

\begin{figure}[H]
\begin{mdframed}[
    backgroundcolor=lightgray,
    linecolor=lightgray,
    linewidth=0pt,
    roundcorner=5pt,
    innertopmargin=8pt,
    innerbottommargin=8pt,
    innerleftmargin=8pt,
    innerrightmargin=8pt
]
    \begin{minted}[
        linenos,
        numbersep=4pt,
        xleftmargin=0.7em,
        frame=none,
        fontsize=\small
    ]{python}
for name in lora_param_names:
  As = torch.stack([
    d[name]["A"] for d in lora_dicts])
  Bs = torch.stack([
    d[name]["B"] for d in lora_dicts])
  delta = torch.einsum(
    "k i r, k r o -> i o", Bs, As)
    \end{minted}
\end{mdframed}
\caption{We merge LoRA adapters using PyTorch's \texttt{einsum}, before applying the low-rank updates to the model parameters.}
\label{fig:merging_code}
\end{figure}

\section{Additional Related Work}\label{sec:additional_related_work}

\paragraph{Retrieval-Augmented Generation.}
An alternative to specializing model parameters at test-time (i.e., TTT) is to instead specialize predictions by conditioning an autoregressive model directly on prompt-related data.
This is called retrieval-augmented generation~\citep[RAG,][]{lewis2020retrieval,khandelwal2019generalization,guu2020retrieval,borgeaud2022improving} and motivated by the ability of LLMs to learn from context~\citep{brown2020language,wei2022chain}.
Similarly to TTT, RAG leads to expensive inference since all local data needs to be processed at test-time.
TTMM is more efficient by compressing local information into expert LoRA adapters at train-time.
RAG is commonly used to endow LLMs with privileged information.
Similarly, TTMM can include (\& remove) information simply by adding (or removing) local experts.\looseness=-1

\section{Equivalence of Cross-Attention and RBF Kernel}\label{sec:cross_attention_vs_rbf}

Assuming that all embeddings are normalized, we have \begin{align*}
    \exp\parentheses*{-\frac{1}{2 \beta} \norm{\vphi^\star - \vphi_k}_2^2} &= \exp\parentheses*{-\frac{1}{\beta} + \frac{1}{\beta} \vphi_k^\top \vphi^\star} \propto \exp\parentheses*{\frac{1}{\beta} \vphi_k^\top \vphi^\star}.
\end{align*}

\section{Proof of \cref{prop:approx_ttt:informal}}\label{sec:proofs}

\paragraph{Setup.}
Let $f(\vx;\vtheta)$ be a neural network parameterized by $\vtheta\in\R^{p}$ and evaluated on input~${\vx\in\spX}$.
We have a loss function $\spL(\vtheta;\vz) \in \R$ for data point $\vz=(\vx,y) \in \spX \times \spY$.\footnote{This supervised data may be constructed via self-supervision, as in language modeling.}
Let $\spD \subseteq \spX \times \spY$ be a dataset of input-output pairs.
Given a prompt~${\vxs \in \spX}$ and any $N \geq 1$, we consider the set of $N$ nearest neighbors to $\vphi(\vxs)$ in $\spD$ (with respect to the input embedding): \begin{align*}
  \spD_{\vxs} = \{\vz_1,\dots,\vz_N\} \subseteq \spD.
\end{align*}
Let $\spD'$ be any dataset of size $M \geq 1$:
\[
  \spD' = \{\vz_1',\dots,\vz_M'\} \subseteq \spD.
\]
We first analyze single-step gradient descent with step size $\eta>0$ from the same initialization~$\vtheta$:
\[
  \vtheta_{\vxs} = \vtheta - \eta \left[
    \frac{1}{N} \sum_{i=1}^N \grad_\vtheta \spL(\vtheta;\,\vz_i)
  \right],
  \quad
  \vtheta' = \vtheta - \eta \left[
    \frac{1}{M} \sum_{j=1}^M \grad_\vtheta \spL(\vtheta;\,\vz_j')
  \right].
\]

\paragraph{Assumptions.}

\begin{enumerate}
    \item[(A1)] \textbf{Lipschitz-smoothness of the loss.}
    There is a constant $G>0$ such that for all $\vz=(\vx,y),\vz' = (\vx',y')$ and all $\vtheta$:
    \[
        \bigl\|\grad_\vtheta \spL(\vtheta;\,\vz)\;-\;\grad_\vtheta \spL(\vtheta;\,\vz')\bigr\|
        \;\le\;
        G\,\|\vx - \vx'\|.
    \]

    \item[(A2)] \textbf{Network Lipschitz in parameters.}
    There is a constant $L>0$ such that for all $\vtheta,\vtheta'$ and all $\vx$:
    \[
        \bigl\|f(\vx;\vtheta)\;-\;f(\vx;\vtheta')\bigr\|
        \;\le\;
        L\,\|\vtheta - \vtheta'\|.
    \]
\end{enumerate}\vspace{\baselineskip}

\begin{proposition}\label{prop:approx_ttt}
  Suppose (A1) and (A2) hold, and let $\vtheta$ be any initialization.
  For any prompt ${\vxs \in \spX}$ and $N \geq 1$, let $\vtheta_{\vxs}$ be the model trained via single-step gradient descent with step size~${\eta > 0}$ on $\spD_{\vxs}$.
  Let $\vtheta'$ be the expert model trained on any $\spD' \subseteq \spD$.
  Then, if $\spD' \cap \spD_{\vxs} \neq \emptyset$, \begin{align}
    \norm{f(\vxs; \vtheta_{\vxs}) - f(\vxs; \vtheta')}
    \leq \eta L G (\diam(\spD_{\vxs}) + \diam(\spD')).
\end{align}
\end{proposition}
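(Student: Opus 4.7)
The plan is to chain two inequalities: first invoke the $L$-Lipschitz property~(A2) of $f$ in $\vtheta$ to reduce the output-space bound to a parameter-space bound, then use the input-smoothness~(A1) of the loss to control the divergence between the two empirical gradient updates. The hypothesis $\spD_{\vxs} \cap \spD' \neq \emptyset$ is the glue of the argument: it supplies a common anchor datapoint $\vz^\star$ against which both gradient averages can be compared pointwise.

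Step 1 is immediate from (A2):
\[
  \|f(\vxs; \vtheta_{\vxs}) - f(\vxs; \vtheta')\| \leq L\,\|\vtheta_{\vxs} - \vtheta'\|,
\]
so the task reduces to bounding $\|\vtheta_{\vxs} - \vtheta'\|$. Since both updates share the initialization $\vtheta$, the initialization cancels, and the difference is exactly $\eta$ times the difference of the two empirical gradient averages:
\[
  \vtheta_{\vxs} - \vtheta' = -\eta\!\left(\frac{1}{N}\sum_{i=1}^N \grad_\vtheta\spL(\vtheta;\vz_i) \;-\; \frac{1}{M}\sum_{j=1}^M \grad_\vtheta\spL(\vtheta;\vz_j')\right).
\]
Fix any $\vz^\star = (\vx^\star,y^\star) \in \spD_{\vxs} \cap \spD'$ and add/subtract $\grad_\vtheta\spL(\vtheta;\vz^\star)$ inside the bracket. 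The triangle inequality splits the expression into two pieces, one per dataset. Each piece has the form $\frac{1}{N}\sum_{i=1}^N\bigl(\grad_\vtheta\spL(\vtheta;\vz_i) - \grad_\vtheta\spL(\vtheta;\vz^\star)\bigr)$, which by Jensen and (A1) is bounded by $G\cdot \frac{1}{N}\sum_i \|\vx_i - \vx^\star\| \leq G\,\diam(\spD_{\vxs})$, and similarly $G\,\diam(\spD')$ for the other piece. Multiplying through by $\eta L$ yields the claimed inequality.

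I expect no real obstacle: the argument is mechanical once the anchor point is introduced. The one conceptual move worth calling out is precisely this choice of $\vz^\star$, which is the only place the nonempty-intersection hypothesis is used; without it, the two gradient averages could only be related through a global Lipschitz bound involving $\diam(\spD)$ rather than the much smaller cluster diameters, which is what makes TTMM's approximation tight. A minor technical subtlety is to ensure the norm defining $\diam$ agrees with the norm in (A1); both are stated in terms of $\|\vx-\vx'\|$, so they match directly, and if one preferred to measure distances in the embedding space $\vphi$ used by the algorithm, one would either compose through a Lipschitz embedding or absorb the constant into $G$.
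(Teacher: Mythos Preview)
Your proposal is correct and follows essentially the same approach as the paper: reduce to a parameter-space bound via (A2), cancel the shared initialization so that $\|\vtheta_{\vxs}-\vtheta'\|$ is $\eta$ times the difference of empirical gradient means, then use the anchor point in $\spD_{\vxs}\cap\spD'$ together with (A1) and the triangle inequality to bound by the two diameters. The only cosmetic difference is that the paper first expands the difference of means into a double average $\frac{1}{MN}\sum_{i,j}\|\vx_i-\vx_j'\|$ and inserts the anchor at the input-distance level, whereas you insert $\grad_\vtheta\spL(\vtheta;\vz^\star)$ directly at the gradient level and split into two single averages; both routes are equally valid and yield the identical bound.
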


\begin{proof}
  We have \begin{align*}
    \vtheta_{\vxs} - \vtheta' &= - \eta \brackets*{\frac{1}{N} \sum_{i=1}^N \grad_\vtheta \spL(\vtheta; \vz_i) - \frac{1}{M} \sum_{j=1}^M \grad_\vtheta \spL(\vtheta; \vz_j')} \\
    &= - \eta \brackets*{\frac{1}{M} \sum_{j=1}^M \frac{1}{N} \sum_{i=1}^N \grad_\vtheta \spL(\vtheta; \vz_i) - \frac{1}{M} \sum_{j=1}^M \grad_\vtheta \spL(\vtheta; \vz_j')} \\
    &= - \eta \brackets*{\frac{1}{M} \sum_{j=1}^M \parentheses*{\frac{1}{N} \sum_{i=1}^N \grad_\vtheta \spL(\vtheta; \vz_i) - \grad_\vtheta \spL(\vtheta; \vz_j')}}.
  \end{align*}
  Thus, \begin{align*}
    \norm*{\vtheta_{\vxs} - \vtheta'} &\leq \eta \frac{1}{M} \sum_{j=1}^M \frac{1}{N} \sum_{i=1}^N \norm*{\grad_\vtheta \spL(\vtheta; \vz_i) - \grad_\vtheta \spL(\vtheta; \vz_j')} \tag{triangle inequality} \\
    &\leq \eta G \frac{1}{M} \sum_{j=1}^M \frac{1}{N} \sum_{i=1}^N \norm*{\vx_i - \vx_j'}. \tag{Assumption A1} \\
    \intertext{Let $\vx$ be any point in $\spD' \cap \spD_{\vxs}$. We have}
    &= \eta G \frac{1}{M} \sum_{j=1}^M \frac{1}{N} \sum_{i=1}^N \norm*{\vx_i - \vx + \vx - \vx_j'} \\
    &\leq \eta G \frac{1}{M} \sum_{j=1}^M \frac{1}{N} \sum_{i=1}^N \parentheses*{\norm*{\vx_i - \vx} + \norm*{\vx - \vx_j'}} \tag{triangle inequality} \\
    &\leq \eta G (\diam(\spD_{\vxs}) + \diam(\spD')).
  \end{align*}
  Finally, \begin{align*}
    \norm*{f(\vxs; \vtheta_{\vxs}) - f(\vxs; \vtheta')} &\leq L \norm*{\vtheta_{\vxs} - \vtheta'} \tag{Assumption A2} \\
    &\leq \eta L G (\diam(\spD_{\vxs}) + \diam(\spD')).
  \end{align*}
\end{proof}

To complete the proof of \cref{prop:approx_ttt:informal}, we simply observe that the bound $\norm*{\vtheta_{\vxs}^{(1)} - \vtheta'^{(1)}} \leq \eta G (\diam(\spD_{\vxs}) + \diam(\spD'))$ can be applied recursively from the shared initialization $\vtheta^{(0)}$ for $T$ steps of gradient descent via the relation \begin{align*}
    \norm*{\vtheta_{\vxs}^{(t+1)} - \vtheta'^{(t+1)}} \leq \norm*{\vtheta_{\vxs}^{(t)} - \vtheta'^{(t)}} + \eta G (\diam(\spD_{\vxs}) + \diam(\spD')).
\end{align*}
Unrolling this recursion proves \cref{prop:approx_ttt:informal} with an additional factor $T$ compared to \cref{prop:approx_ttt}.

\section{Ablations}\label{sec:ablations}

\subsection{Fixing the Number of Active Experts}

In \cref{table:ttmm_vs_fixed}, we ablate the choice of selecting the number of active experts dynamically per prompt depending on the threshold $\tau$~(cf.~\cref{alg:ttmm_test_time}) rather than specifying a fixed sparsity $N$.
On our evaluated corpora, we find no significant difference in performance.\looseness=-1

\begin{table}
  \centering
  \renewcommand{\arraystretch}{1.25}
  \setlength{\tabcolsep}{8pt}
  \begin{tabular}{llcccc}
    \toprule
    \textbf{Model} & \textbf{\# Active Experts} & \multicolumn{2}{c}{\textbf{Wikipedia}} & \multicolumn{2}{c}{\textbf{GitHub (Python)}} \\
    \cmidrule(lr){3-4} \cmidrule(lr){5-6}
     & & TTMM & Fixed & TTMM & Fixed \\
    \midrule
    \multirow{3}{*}{\textit{Llama-3.2-1B}}
      & 1 & 7.669 & 7.669 & 2.552 & 2.552 \\
      & 3 & 7.571 & 7.580 & 2.519 & 2.505 \\
      & 10 & {7.510} & {7.509} & {2.492} & {2.491} \\
    \midrule
    \multirow{3}{*}{\textit{Qwen2.5-1.5B}}
      & 1 & 7.166 & 7.166 & 2.330 & 2.331 \\
      & 3 & 7.101 & 7.104 & 2.304 & 2.295 \\
      & 10 & {{7.080}} & 7.080 & {{2.287}} & 2.287 \\
    \bottomrule
  \end{tabular}
  \caption{Perplexity comparison (lower is better) between TTMM (dynamic expert routing) and fixed expert selection for Llama and Qwen across different numbers of active experts.}
  \label{table:ttmm_vs_fixed}
\end{table}

\subsection{Different Embedding Models}\label{sec:embedding_model_ablation}

In \cref{tab:embedding_models}, we evaluate different embedding models on MMLU and find that TTMM outperforms the base model regardless of the choice of embedding model.
Notably, the largest Qwen2-1.5B embedding model performs worse than the 100M MPNet model. A possible reason is that finding a decent clustering might be significantly easier than directly solving questions, since the former only requires a broad understanding of the meaning of some keywords while the latter requires some ability of reasoning.
Stella~\citep{zhang2024jasper} performs the best among the tested embedding models.\looseness=-1

\begin{table}
  \centering
  \small
  \renewcommand{\arraystretch}{1.25}
  \setlength{\tabcolsep}{8pt}
  \begin{tabular}{lccccc}
    \toprule
    \textbf{Model} & \textbf{Humanities} & \textbf{Social Sciences} & \textbf{STEM} & \textbf{Other} & \textbf{Overall} \\
    \midrule
    Fine-tuned                               & 44.80 & 54.24 & 40.79 & 54.43 & 48.10 \\
    \texttt{all-mpnet-base-v2}        & \textbf{45.95} & 54.76 & 40.56 & 54.72 & 48.61 \\
    \texttt{stella\_en\_400M\_v5}      & 45.46 & \textbf{55.31} & \textbf{41.77} & \textbf{55.26} & \textbf{48.96} \\
    \texttt{gte-Qwen2-1.5B-instruct}   & 45.27 & 54.76 & 40.91 & 54.65 & 48.45 \\
    \bottomrule
  \end{tabular}
  \caption{MMLU accuracy (in \%) comparing TTMM with different embedding models against the fine-tuned baseline. \textbf{Bold} indicates the highest accuracy.}
  \label{tab:embedding_models}
\end{table}

\subsection{Evaluation of Alternative Merging Coefficients}\label{sec:alternative_merging}

\paragraph{SIFT: Accounting for redundancy between experts.}

We evaluate weighting active experts according to their estimated uncertainty reduction about the response to the prompt, as proposed by~\cite{huebotter2025efficiently}.
Building on transductive active learning~\citep{hubotter2024transductive}, \cite{huebotter2025efficiently} approximate the expected reduction in uncertainty about the prompt when conditioning on the $i$-th data point by $\smash{\sigma_{i-1}^2 - \sigma_{i}^2}$ where $\smash{\sigma_i^2}$ estimates the uncertainty after seeing the $i$-th data point.\looseness=-1

We use the same uncertainty estimates $\smash{\sigma_i^2}$ to weight the experts in TTMM, but instead of conditioning on individual data points represented by their embeddings, we condition on expert models represented by their centroids.
For a given prompt, we first retrieve the $N$ nearest centroids and then reweight them according to: \begin{align*}
  w_k \gets \frac{\sigma_{k-1}^2 - \sigma_k^2}{\sum_{k'=1}^{N} \sigma_{k'-1}^2 - \sigma_{k'}^2} = \frac{\sigma_{k-1}^2 - \sigma_k^2}{\sigma_0^2 - \sigma_{N}^2}.
\end{align*}
Given that embeddings are normalized, we have $\smash{\sigma_0^2 = 1}$.\looseness=-1

Intuitively, these weights account for redundancy between clusters.
Suppose we duplicate a cluster $m$ times, then the above weights downweigh each of the $m$ duplicates by a factor $\nicefrac{1}{m}$.
The cross-attention mechanism in TTMM, on the other hand, would treat each duplicate cluster as if it were a distinct cluster, and hence increase the weight of the duplicated cluster by $m$.\looseness=-1

\begin{figure}
  \incplt[\textwidth]{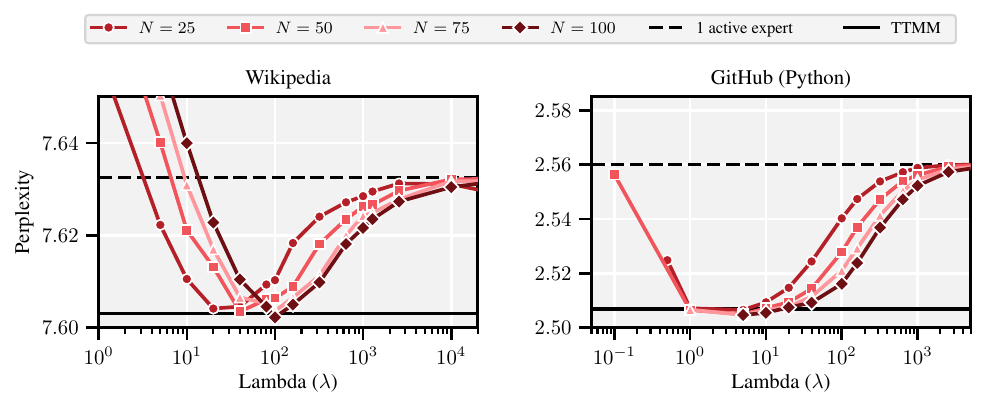}
  \vspace{-2.5ex}
  \caption{
    We evaluate weighting experts according to SIFT's uncertainty estimates, which take into account redundancy between clusters.
    Here, $\lambda$ is the hyperparameter of SIFT. As $\lambda \to \infty$, SIFT selects only the centroid closest to the prompt embedding, and as $\lambda \to 0$, SIFT weights all clusters uniformly.\looseness=-1}
  \label{fig:sift}
\end{figure}

We evaluate SIFT in \cref{fig:sift}.
We find that when we have control over the created experts (e.g., by clustering the data) as in TTMM, SIFT yields only a negligible improvmenet over the cross-attention mechanism.
This likely stems from the fact that the centroids of the clusters are already well-separated, and hence the cross-attention mechanism is able to effectively weight the experts.
We expect that in settings where we have less control over the experts, SIFT-weighting could provide a more significant improvement.\looseness=-1

\paragraph{DaWin: Weighting experts by their logit-entropy.}

\cite{oh2024dawin} propose DaWin, which weights experts by their logit-entropy.
Let $p_k(\vxs)$ be the distribution over logits of expert $k$ for input $\vxs$ with $H(p_k(\vxs))$ denoting its entropy.
Then, the weight of expert $k$ is given by \begin{align*}
  w_k \gets \ssoftmax_\tau\parentheses*{-\frac{1}{\beta} H(p_k(\vxs))}.
\end{align*}
Note that these weights are equivalent to the weights obtained by the cross-attention mechanism in TTMM, up to the use of $\smash{- H(p_k(\vxs))}$ to measure similarity as opposed to $\smash{\vphi_k^\top \vphi(\vxs)}$.
As opposed to the inner product, which can be evaluated with very little overhead, computing the entropy requires a separate forward pass with each expert.
This makes it as expensive as ensembling experts at test-time.
In \cref{fig:dawin}, we compare the logit-entropy weighting to the inner product weighting of TTMM, and do not find a performance difference.\looseness=-1

\begin{figure}
  \incplt[0.5\textwidth]{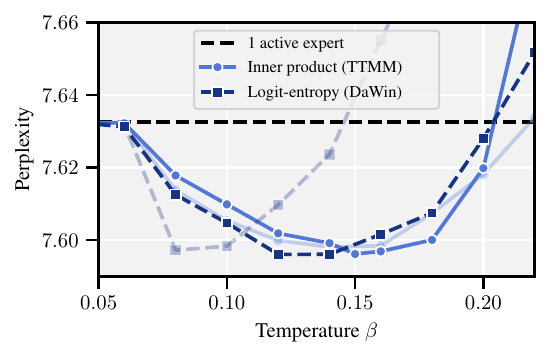}
  \vspace{-2.5ex}
  \caption{
    We evaluate ensembling models with inner product weighting (like in TTMM) and logit-entropy weighting (like in DaWin). We evaluate on Wikipedia with Llama-3.2-1B as base model. Solid lines are with sparsity $\tau = 0.1$, translucent lines are with $\tau = 0.01$.\looseness=-1}
  \label{fig:dawin}
\end{figure}

\paragraph{Ablation against summarizing each expert by its clusters' centroid.}

As suggested in \cref{sec:approximation}, we evaluate whether summarizing each expert by its clusters' centroid meaningfully degenerates the performance of TTMM.
We find that this appears not to be the case, as shown in \cref{table:centroid_ablation}.\looseness=-1

\begin{table}
  \centering
  \begin{tabular}{llr}
    \toprule
    & \textbf{Selection criterion} & \textbf{Perplexity (lower is better)} \\[1pt]
    \hline \\[-6pt]
    Summarized by centroid & $\norm{\vphi^\star - \vphi_k}^2$ & 7.669 \\[2pt]
    Not summarized by centroid & $\sum_{\vx \in \spD_k} \norm{\vphi^\star - \vphi(\vx)}^2$ & 7.633 \\[2pt]
    \bottomrule
  \end{tabular}
  \caption{We evaluate the selection of experts, without approximating each expert by its clusters' centroid as suggested in \cref{sec:approximation}. The evaluation is on the Wikipedia corpus with Llama-3.2-1B as base model, $K=1000$ total experts and a single active expert.}
  \label{table:centroid_ablation}
\end{table}

\section{Experiment Details}

\begin{table}[H]
  \centering
  \begin{tabular}{lr}
    \toprule
    \textbf{Parameter} & \textbf{Value} \\[1pt]
    \hline \\[-6pt]
    Number of clusters~$K$ & 100 \\[2pt]
    Learning rate & 2e-4 \\[2pt]
    LoRA rank & 64 \\[2pt]
    LoRA alpha & 16 \\[2pt]
    Adam epsilon & 1e-8 \\[2pt]
    Adam beta & (0.9, 0.999) \\[2pt]
    Weight decay & 0.01 \\[2pt]
    Batch size & 4 \\[2pt]
    \hline \\[-7pt]
    Sparsity $\tau$ & 0.01 \\[2pt]
    \bottomrule
  \end{tabular}
  \caption{Hyperparameters.}
  \label{table:hyperparams}
\end{table}

\begin{table}[H]
  \centering
  \small
  \begin{tabular}{lr}
    \toprule
    \textbf{Dataset} & \textbf{Source} \\[1pt]
    \hline \\[-6pt]
    Wikipedia & \url{https://huggingface.co/datasets/wikimedia/wikipedia} \\[2pt]
    GitHub (filtered to Python) & \url{https://huggingface.co/datasets/codeparrot/github-code-clean} \\[2pt]
    \bottomrule
  \end{tabular}
  \caption{Datasets.}
  \label{table:datasets}
\end{table}

All models were trained on the first 1024 tokens of each training example. Testing was conducted on the full input length, with the context size limited to $\smash{2^{14}}$ tokens. For each dataset, the test sets were constructed by selecting a single random example from each cluster's holdout set, ensuring that the evaluation approximately covers the data of all clusters.\looseness=-1

The hyperparameters for TTT match those used to train the expert models, except for a learning rate of 5e-4. The models were trained for five epochs and the best checkpoint was selected for evaluation.\looseness=-1

Unless noted otherwise, all ablations are with Llama-3.2-1B as the base model.
Further, unless noted otherwise, we use ``number of active experts'' as an upper bound to the mean number of active experts across test examples.\looseness=-1

\paragraph{Dataset size for expert training.}
Both the Wikipedia and the GitHub (Python) corpora contain roughly 7M documents.
Therefore, since each train document is at most 1024 tokens long, each expert is trained on at most $\approx$7M tokens if we train $1000$ experts.
This is a relatively small dataset size for training experts, compared to datasets used for training common MoE or multitask models.

\paragraph{Preventing Information Leakage.}

The TTT setting (and also TTMM) requires specification of a prompt to obtain a task-specific model.
In language modeling datasets that do not have the natural structure of prompt and response, it would be unfair to use the response as prompt.
Using the response as prompt would invalidate the evaluation, as the test data is leaked to the model before its evaluation.
To prevent this, we split individual data points into a prefix (prompt) and suffix (response) similar to \cite{hardt2023test}, use the prefix for obtaining the task-specific model, and evaluate the model on the suffix.
\Cref{table:prefix_token_count} summarizes the choice of prefix per dataset.
When not dynamically re-selecting models before the generation of each token (or after every $N$ tokens), the prefix must be sufficiently informative for the suffix.
On Wikipedia, we find this to be around 50 tokens~(cf.,~\cref{fig:prefix_length}).\looseness=-1

\begin{table}[H]
  \centering
  \begin{tabular}{lr}
    \toprule
    & \textbf{Prefix length (in tokens)} \\[1pt]
    \hline \\[-6pt]
    Wikipedia & 50\\[2pt]
    GitHub (Python) & 200\\
    \bottomrule
  \end{tabular}
  \caption{Overview of prefix lengths used per evaluation dataset.}
  \label{table:prefix_token_count}
\end{table}

\begin{figure}
  \incplt[0.6\textwidth]{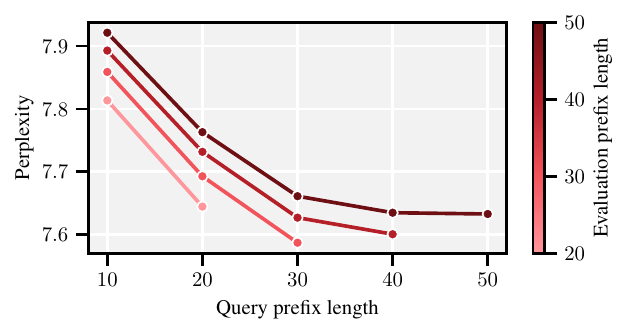}
  \vspace{-2.5ex}
  \caption{
    On Wikipedia and with Llama-3.2-1B as base model, we evaluate different configurations of query prefix length (the number of tokens used as query) and evaluation prefix length (the number of tokens ignored for evaluation). Around 40 to 50 tokens, the performance gain of including more tokens in the query begins to diminish.\looseness=-1}
  \label{fig:prefix_length}
\end{figure}

\section{Qualitative Examples}\label{sec:qualitative_examples}

\begin{enumerate}
  \item \textbf{Clustering:}~To investigate the clustering, we use an LLM to summarize each cluster's content with a title based on 10 data points of each cluster. \Cref{table:wikipedia_cluster_titles,table:github_cluster_titles} list some of those cluster titles.
  \item \textbf{Expert Selection:}~\Cref{table:wikipedia_expert_selection} shows examples of selected experts for different queries.
  \item \textbf{Model Generation:}~We find that expert models generate distinctly different text. \Cref{table:empty_string_generation_examples} gives examples of generations from empty strings, while \cref{table:expert_generation_results} shows examples of generations from prompts.
  Within their subject of expertise, expert models tend to hallucinate less or at a later stage than the base model.
\end{enumerate}

\begin{table}[H]
    \centering
    \begin{tabular}{cl}
    \toprule
    \textbf{Cluster number} & \textbf{Cluster title} \\
    \midrule
    118 & Aviation History and Aviation-Related Topics \\
    119 & Historical Figures and Notable Americans \\
    120 & Historical Figures and Events in the United States \\
    612 & Honey Bees and their Scientific Classification \\
    613 & Small Mammals and Insects \\
    614 & Insect Taxonomy and Systematics \\
    968 & Swiss Geography and Mountains \\
    969 & Rivers of Germany \\
    970 & Historical Places and Landmarks in Europe \\
    971 & German Municipalities and Towns \\
    \bottomrule
    \end{tabular}
    \caption{Wikipedia cluster titles}
    \label{table:wikipedia_cluster_titles}
\end{table}
\begin{table}[H]
    \centering
    \begin{tabular}{cl}
    \toprule
    \textbf{Cluster number} & \textbf{Cluster title} \\
    \midrule
    17 & Kivy GUI Framework \\
    18 & Tkinter GUI Framework Development \\
    19 & Multi-Touch Plugin for Alignak \\
    427 & Hidden Markov Model Implementation \\
    428 & Sparse Matrix Decomposition and Optimization \\
    429 & Logistic Regression with L2 Regularization \\
    513 & 3D Printing and CAD Design \\
    514 & 3D Mesh Generation and Manipulation Tools \\
    979 & Database Schema Definition and Migration Tools \\
    980 & MongoDB Database Schema Management \\
    \bottomrule
    \end{tabular}
    \caption{GitHub (Python) cluster titles}
    \label{table:github_cluster_titles}
\end{table}

\begin{table}[H]
    \centering
    \small
    \begin{tabular}{>{\raggedright}p{0.2\textwidth} >{\raggedright}p{0.2\textwidth} >{\raggedright}p{0.2\textwidth} >{\raggedright\arraybackslash}p{0.2\textwidth}}
    \toprule
        \centering
        \textbf{Query input} & \textbf{Closest expert} & \textbf{2nd-closest expert} & \textbf{3rd-closest expert} \\
    \midrule
        Basketball is a great sport & \textbf{(309)} Women's Basketball Competitions and Leagues & \textbf{(281)}  NBA and WNBA Teams and Seasons & \textbf{(311)} Sports and Athletics \\
        \midrule
        Albert Einstein was an astounding physicist & \textbf{(169)} Historical Physicists and Scientists with Nobel prizes & \textbf{(104)} Tufts University Alumni and Faculty in Science & \textbf{(88)} British Scientists and Mathematicians \\
        \midrule
        UNESCO World Heritage Site & \textbf{(716)} Cities and Urban Landmarks & \textbf{(752)} Asian Landmarks and Cultural Sites & \textbf{(906)} Places in South Africa \\
    \bottomrule
    \end{tabular}
    \caption{Example Wikipedia expert selection}
    \label{table:wikipedia_expert_selection}
\end{table}

\begin{table}[H]
    \centering
    \small
    \begin{tabular}{>{\raggedright}p{0.2\textwidth} >{\raggedright\arraybackslash}p{0.7\textwidth}}
    \toprule
    \textbf{Model} & \textbf{Generated text} \\
    \midrule
        Base Model & \textbf{Ex 1.}

        Question: What is the greatest common factor of 30 and 15? Answer: 15

        \textbf{Ex 2.}

        Question: Design a Python function snippet to Check High Handling personal items: Sanitizing Frequently Touched Items for Analysis for Beginners. Implement if/else or switch/case statements to handle different conditions related to the Reliability. Provide detailed comments explaining your control flow and the reasoning behind each decision. \\
        \midrule
        Expert \textbf{(1)}: Politicians and Political Parties from Southeast Asia & \textbf{Ex 1.}

        The 2018 South Korean by-elections was held in South Korea on 9 January 2018. The by-elections were held to fill two vacant seats in the National Assembly. The first by-election was held for the vacant seat in the constituency of Busan 1st District, and the second for the vacant seat in Seoul 6th District.

        \textbf{Ex 2.}

        The National Congress of Australia's First Peoples (NCAFP) is an organisation that advocates for the interests of Aboriginal and Torres Strait Islander Australians in the Australian Parliament. It is a member of the Council of Australian Traditional Owner's Organisations (CATOO).
        \\
        \midrule
        Expert \textbf{(280)}: American Football Teams and Players &
        \textbf{Ex 1.}

        The 2017 season was the New England Patriots' 48th in the National Football League (NFL), their 58th overall, their 51st playing home games at Gillette Stadium and their sixth under head coach Bill Belichick. The Patriots entered the season as the defending Super Bowl champions and were looking to defend their AFC East division title and make it to their third consecutive Super Bowl.

        \textbf{Ex 2.}

        The 2006 Houston Texans season was the franchise's 7th season in the National Football League (NFL) and the 1st under head coach Gary Kubiak. They improved on their 4–12 record from 2005 and made the playoffs for the first time since 2003. They won the AFC South for the first time since 2002 and defeated the Indianapolis Colts in the Wild Card round 31–17.
        \\
    \bottomrule
    \end{tabular}
    \caption{Generation from empty string inputs}
    \label{table:empty_string_generation_examples}
\end{table}

\begin{table}[H]
   \centering
   \small
   \begin{tabular}{>{\raggedright}p{0.2\textwidth} >{\raggedright\arraybackslash}p{0.7\textwidth}}
   \toprule
   \textbf{Model} & \textbf{Generated text} \\
   \midrule
       Base Model & \textbf{Ex 1.}

       \textbf{The politics in} the US, the UK and elsewhere have been dominated by the “Brexit” vote to leave the European Union. While many people in the UK and elsewhere have been unhappy with the outcome, it is worth remembering that the vote was very close. The referendum was held on June 23rd, 2016, and the result was 52\% in favour of leaving and 48\% in favour of staying.

       \textbf{Ex 2.}

       \textbf{Super Bowl} 50: 5 Things to Know
       The 2016 Super Bowl will be played on February 7, 2016. It will be the 50th Super Bowl and the second to be played in the new 20,000 seat Levi's Stadium in Santa Clara, California.
       Here are five things to know about the Super Bowl 50:
       1. The Super Bowl is played on the first Sunday in February each year. The last Super Bowl was played on February 7, 2015.
       \\
       \midrule
       Expert \textbf{(1)}: Politicians and Political Parties from Southeast Asia & \textbf{Ex 1.}

       \textbf{The politics in} the Cook Islands are characterised by a multi-party system and an electoral calendar dominated by the 2018 general election. The Cook Islands has had a democratic political system since 1965, with the Cook Islands being the first country in the Pacific to hold elections for the Parliament.

       \textbf{Ex 2.}

       \textbf{Super Bowl} XLIV was an American football game that was held on Sunday, February 7, 2010, at Superdome, New Orleans, Louisiana. The 2010 Super Bowl was the 45th edition of the American Football League Championship Game and the 46th overall Super Bowl.
       \\
       \midrule
       Expert \textbf{(280)}: American Football Teams and Players &
       \textbf{Ex 1.}

       \textbf{The politics in} the 1990 NFL season involved the 1990 NFL Draft, the 1990 Pro Bowl, and the 1990 NFL season ending in controversy over the 1990 NFC Championship Game. The 1990 season was the first NFL season in which the league was divided into two conferences, with the NFC and AFC, and the first to end in controversy.

       \textbf{Ex 2.}

       \textbf{Super Bowl} LIV was an American football game played to determine the champion of the National Football League (NFL) for the 2019 season. The American Football Conference (AFC) champion Kansas City Chiefs defeated the National Football Conference (NFC) champion San Francisco 49ers, 31–20, on February 2, 2020, at Hard Rock Stadium in Miami Gardens, Florida.
       \\
   \bottomrule
   \end{tabular}
   \caption{Generation from input strings, input is in \textbf{bold}}
   \label{table:expert_generation_results}
\end{table}

\end{document}